\definecolor{FlatUIOrange}{RGB}{230,126,34}
\definecolor{FlatUiBlue}{RGB}{41,128,185}
\definecolor{FlatUIRed}{RGB}{232,65,24}
\definecolor{seabornGreen}{RGB}{85,168,104}
\newcommand{\blue}[1]{\textcolor{FlatUiBlue}{#1}}
\newcommand{\green}[1]{\textcolor{seabornGreen}{#1}}
\newcommand{\orange}[1]{\textcolor{FlatUIOrange}{#1}}
\newcommand{\oldw}{\mathbf{w}_{\text{old}}}
\newcommand{\neww}{\mathbf{w}_{\text{new}}}
\newcommand{\otheta}{\pmb{\theta}_{\text{old}}}
\newcommand{\ntheta}{\pmb{\theta}_{\text{new}}}
\title{Deep Reinforcement Learning with\\Gradient Eligibility Traces}
\author{
Esraa Elelimy\textsuperscript{1,2,$\dagger$},
Brett Daley\textsuperscript{1,2,$\dagger$},
Andrew Patterson\textsuperscript{1,2},\\
Marlos C.\ Machado\textsuperscript{1,2,3},
Adam White\textsuperscript{1,2,3},
Martha White\textsuperscript{1,2,3}
}
\keywords{Deep RL, Gradient TD, Eligibility Traces, PPO} 
\begin{document}

\makeCover  
\maketitle  

\begin{abstract}

\end{abstract}

\section{Introduction}

Estimating the value function is a fundamental component of most RL algorithms.
All value-based methods depend on estimating the action-value function for some target policy and then acting greedily with respect to those estimated values.
Even in policy gradient methods, where a parameterized policy is learned, most algorithms learn a value function along with the policy.
Many RL algorithms use semi-gradient temporal-difference (TD) learning algorithms for value estimation, despite known divergence issues under nonlinear function approximation \citep{tsitsiklis1997analysis} and under off-policy sampling \citep{baird1995residual}, both of which frequently arise in modern deep RL settings.
\looseness=-1

There have been significant advances towards deriving sound off-policy TD algorithms.
For a brief history, the mean squared Bellman error ($\BE$) was an early objective, which produces a different solution from the TD fixed point but similarly aims to satisfy the Bellman equation.
However, the $\BE$ was not widely used because it is difficult to optimize without a simulator due to the double-sampling problem \citep{baird1995residual}.
The mean squared \emph{projected} Bellman error for linear function approximation---which we call the linear $\PBE$---was introduced later, and a class of Gradient TD methods was derived to optimize this objective \citep{sutton2009fast}. 
An early attempt to extend Gradient TD methods to nonlinear function approximation made the assumption that the updates to the parameters of the value function in each step are small, and thus the nonlinear manifold of the value functions can be treated as locally linear \citep{maei2009convergent}. 
Recently, \cite{patterson2022generalized} introduced a generalization of the $\PBE$ objective that is based on the conjugate form of the $\BE$ \citep{dai2017learning}. This new objective made it possible to derive Gradient TD methods for the nonlinear setting, while still having an equivalence to the previous Gradient TD methods in the case of linear function approximation.
This generalized objective was further extended to allow for robust losses in the Bellman error \citep{patterson2023robust} and is a promising avenue for the development of sound value-estimation algorithms. However, it has not been extended to include multi-step updates. In the rest of the paper, we refer to this generalized $\PBE$ objective as simply the $\PBE$ objective and use the term linear $\PBE$ when referring to the previous linear-only objective.

\begin{table}[t]
    \caption{
        Related Gradient TD literature. Our paper is the first to define and optimize the generalized PBE($\lambda$) objective for nonlinear function approximation (see \Cref{sec:fwd_view}).
    }
    \begin{center}
    \resizebox{\textwidth}{!}{
        \begin{tabular}{>{\centering\arraybackslash}m{0.22\textwidth} 
                        >{\centering\arraybackslash}m{0.28\textwidth} 
                        >{\centering\arraybackslash}m{0.22\textwidth} 
                        >{\centering\arraybackslash}m{0.28\textwidth}}
            \toprule
            \multicolumn{2}{c}{\bf Linear Function Approximation} & \multicolumn{2}{c}{\bf Nonlinear Function Approximation} \\\addlinespace
            {\bf One-step} & {\bf $\lambda$-return} & {\bf One-step} & {\bf $\lambda$-return} \\
            \midrule
            \citep{sutton2009fast} & \citep{maei2010gq} & \citep{maei2009convergent,patterson2022generalized} & Our paper \\
            \bottomrule
        \end{tabular}
    }
    \end{center}
    \label{tab:related_lit}
\end{table}

In this paper, we extend the $\PBE$ to incorporate multistep credit assignment using $\lambda$-returns.
\Cref{tab:related_lit} summarizes the algorithmic gaps that we fill.
We derive similar gradient variants as were derived for the one-step $\PBE$ \citep{patterson2022generalized}, but now also need to consider forward-view and backward-view updates for our proposed objective, $\PBE$($\lambda$). 
We introduce Gradient PPO, a policy gradient algorithm that modifies PPO to use our sound forward-view value estimation algorithms. We show that Gradient PPO significantly outperforms PPO in two MuJoCo environments and is comparable in two others.
We also introduce QRC($\lambda$), which uses backward-view (i.e., eligibility trace updates) and is suitable for online streaming settings.\footnote{A setting motivated by hardware limitations where we replay buffers are not used and updates are made one sample at a time. See \citet{elsayed2024streaming}.}
We show that QRC($\lambda$) is significantly better in all MinAtar environments than StreamQ \citep{elsayed2024streaming}, a recent algorithm combining Q($\lambda$) with a new optimizer and an initialization scheme for better performance in streaming settings.
We investigate multiple variants of our forward-view and backward-view algorithms; as was concluded for $\PBE$(0) \citep{ghiassian2020gradient,patterson2022generalized}, we find that a variant based on regularized corrections called TDRC consistently outperforms the other variants.
This work provides a clear way to incorporate gradient TD methods with eligibility traces into deep RL methods and offers two new promising algorithms that perform well in practice. 

\section{Background}
\label{sec:background}

We consider the Markov Decision Process (MDP) formalism where the agent-environment interactions are described by the tuple $(\S, \A, p, \mathcal{R})$.
At each time step, $t= 1,2,3,\ldots$, the agent observes a state, $S_t \in \S$, and takes an action, $A_t \in \A$, according to a policy $\pi: \S \times \A \rightarrow {[0,1]}$, where $\S$ and $\A$ are finite sets of states and actions, respectively.
Based on $S_t$ and $A_t$, the environment transitions to a new state, $S_{t+1} \in \S$, and yields a reward, $R_{t+1} \in \mathcal{R}$, with probability $p(S_{t+1},R_{t+1} \mid S_t,A_t)$. The value of a policy is defined as 
$\smash{v_\pi(s) \defeq \E_\pi [G_t \mid S_t = s]}, \forall s \in \S$, 
where the return, $\smash{G_t \defeq \sum_{i=0}^\infty \gamma^i R_{t+1+i}}$, is the discounted sum of future rewards from time $t$ with discount factor $\gamma \in [0,1]$. 

The agent typically estimates the value function using 
a differentiable parameterized function, such as a neural network.
We define the parameterized value function as $\hat{v}(s,\vw) \approx v_\pi(s)$, where $\vw \in \R^{d_\vw}$ is a weight vector and $d_\vw < |\S|$.
One objective that can be used to learn this value function is the mean squared Bellman error ($\BE$):
\begin{equation}~\label{eq:be1}
    \BE(\vw) \defeq \sum_{s \in \S} d(s) \, \E_\pi [\delta (s) \mid S=s]^2
    \,,
\end{equation}
where $d$ is the state distribution\footnote{Note that we write the expectation with a sum to make the notation more accessible, but this can be generalized to continuous state spaces using integrals.} and $\delta$ is the TD error for a transition $(S,A,S',R)$.
The $\delta$ can be different depending on the algorithm.
For state-value prediction, we use $\delta \defeq R + \gamma \hat{v}(S',\vw) - \hat{v}(S,\vw)$.
For control, to learn optimal action-values $q_*(s,a)$, we use $\delta \defeq R + \gamma \max_{a' \in \A} \hat{q}(S',a',\vw) - \hat{q}(S,A,\vw)$.
For control, we would additionally condition on $A = a$ and sum over $(s,a)$ instead of $s$ in \Cref{eq:be1}, but for simplicity of exposition, we only show the objectives for $\hat{v}$. 
We cannot generally reach zero $\BE$, unless the true values are representable by our parameterized function class for all states with nonzero weight. Additionally, the $\BE$ objective is difficult to optimize, due to the double sampling and identifiability issues \citep{sutton2018reinforcement}, and we instead consider a more practical objective called the $\PBE$.

The $\PBE$ objective introduced by \cite{patterson2022generalized} generalizes and unifies several objectives and extends Gradient TD methods to nonlinear function approximation. 
The $\PBE$ objective builds on prior work \citep{dai2017learning} that avoids the double sampling by reformulating the $\BE$ using its conjugate form with an auxiliary variable $h$. Using the fact that the biconjugate of a quadratic function is $x^2 = \max_{h \in \R} 2xh - h^2$, we can re-express the $\BE$ as
\begin{equation}
    \label{eq:be}
    \BE(\vw) \defeq \max_{h \in \mathcal{F}_{\text{all}}} \sum_{s \in S} d(s) \Big( 2 \, \delta_\pi(s) \, h(s) - h(s)^2 \Big)
    \,,
\end{equation}
where $\mathcal{F}_{\text{all}}$ is the space of all functions and $\delta_\pi(s) \defeq \E_\pi[\delta_t \mid S_t=s]$.
For a state $s$, the optimal $h^*(s) = \delta_\pi(s)$, and we recover the $\BE$.
More generally, we can learn a parameterized function that approximates this auxiliary variable, $h$.
Letting $\H$ be the space of the parameterized functions for $h$, the $\PBE$ then projects $\BE$ into $\H$, and is defined as:
\begin{equation}
    \label{eq:gpbe}
    \PBE(\vw) \defeq \max_{h \in \H} \sum_{s \in S} d(s) \left( 2 \, \delta_\pi(s) \, h(s) - h(s)^2 \right)
    \,.
\end{equation}
Depending on the choice of $\H$, the $\PBE$ can express a variety of objectives.
For a linear function class, we recover the linear $\PBE$, and for a highly expressive function class, we recover the (identifiable) $\BE$ \citep{patterson2022generalized}.

The $\PBE$ can be optimized by taking the gradient of \Cref{eq:gpbe}, which results in a saddle-point update called \emph{GTD2}.
Alternatively, we can do a gradient correction update, which results in the empirically preferable algorithm called \emph{TDC}.
Note that GTD2 and TDC were introduced for the linear setting \citep{sutton2009fast}, but the same names are used when generalized to the nonlinear setting \citep{patterson2022generalized}, so we follow that convention. 
TDC has been shown to outperform GTD2 \citep{ghiassian2020gradient,white2016investigating,patterson2022generalized} and has been further extended to include a regularization term, resulting in a better update called TDRC \citep{ghiassian2020gradient,patterson2022generalized}.
\looseness=-1

We briefly include the update rule for these three Gradient TD methods, as we will extend them in the following sections.
For $\hat{v}$ parameterized by $\vw$ and $\hat{h}$ parameterized by $\vtheta$, 
all methods can be written as jointly updating
\begin{align}
\begin{split}
    \vw_{t+1} &\gets \vw_t + \alpha \, \Delta \vw_t
    \,,\\
    \vtheta_{t+1} &\gets \vtheta_t + \alpha \, \Delta \vtheta_t
    \,,
\end{split}
\end{align}
where $\alpha \in (0,1]$ is a step-size hyperparameter---or, more generally, an optimizer like Adam \citep{kingma2014adam} can be used. 
For GTD2, $\Delta \vw_t$ is
\begin{equation*}
    \Delta \vw_t = -\hat{h}(S_t, \vtheta_t)\nabla_{\vw} \delta_t  =  \hat{h}(S_t, \vtheta_t) \big( \nabla_{\vw}\hat{v}(S_t,\vw) - \gamma \nabla_{\vw} \hat{v}(S_{t+1},\vw) \big) 
    \,.
\end{equation*}
The TDC update replaces the term $\hat{h}(S_t, \vtheta_t)\nabla_{\vw}\hat{v}(S_t,\vw)$ with $\delta_t\nabla_{\vw}\hat{v}(S_t,\vw)$, to get the update
\begin{equation*}
    \Delta \vw_t = \delta_t\nabla_{\vw}\hat{v}(S_t,\vw) - \hat{h}(S_t, \vtheta_t) \nabla_{\vw} \gamma \hat{v}(S_{t+1},\vw) 
    \,.
\end{equation*}
This update is called TD with corrections, because the first term is exactly the TD update and the second term acts like a correction to the semi-gradient TD update. This modified update is motivated by noting that $h^*(s) = \delta_\pi(s)$, and so replacing the approximation $\hat{h}(S_t, \vtheta_t)$ with an unbiased sample $\delta_t$ instead is sensible. TDC has been shown to converge to the same fixed point as TD and GTD2 in the linear setting \citep{maei2011gradient}. 

Both GTD2 and TDC have the same $\Delta \vtheta_t$ which can be written as $\Delta \vtheta_t = \left(\delta_t - \hat{h}(S_t, \vtheta_t)\right) \nabla_{\vtheta}\hat{h}(S_t, \vtheta_t).$
TDRC uses the same $\Delta \vw_t$ as TDC, but regularizes the auxiliary variable:
\begin{equation*}
    \Delta \vtheta_t = \left(\delta_t - \hat{h}(S_t, \vtheta_t)\right) \nabla_{\vtheta}\hat{h}(S_t, \vtheta_t) - \beta \vtheta_t
    \,,
\end{equation*}
where $\beta \in [0, \infty)$.
For $\beta = 0$, TDRC is the same as TDC.
As $\beta$ is increased, $h$ gets pushed closer to zero and TDRC becomes closer to TD. TDRC was found to be strictly better than TDC, even with a fixed $\beta = 1$ across several problems~\citep{ghiassian2020gradient,patterson2022generalized}. This improvement was further justified theoretically with a connection to robust Bellman losses \citep{patterson2023robust}, motivating regularization on $h$.

\section{The Generalized PBE($\lambda$) Objective}~\label{sec:gpbe_lambda}

The basis of the $\PBE$ is the $1$-step TD error, which means that credit assignment can be slow. Reward information must propagate backward one step at a time through the value function, via bootstrapping.
In this section, we extend the $\PBE$ to incorporate multistep credit assignment using the $\lambda$-return.

First, let us define our multistep target. The simplest multistep return estimator is the $n$-step return, defined as
\begin{equation*}
    \nstep{n}_t \defeq \Big( \sum_{i=0}^{n-1} \gamma^i R_{t+1+i} \Big) + \gamma^n \hat{v}(S_{t+n},\vw_t)
    \,.
\end{equation*}

The $\lambda$-return is the exponentially weighted average of all possible $n$-step returns:
\begin{equation}
    \label{eq:lambda-return}
    G^{\lambda}_t \defeq (1-\lambda) \sum_{n=1}^{\infty} \lambda^{n-1} \nstep{n}_t
    \,,
\end{equation}
where $\lambda \in [0,1]$.
The $\lambda$-return is the return target for TD($\lambda$) \citep{sutton1988learning} and comes with a number of desirable properties:
it smoothly interpolates between TD and Monte Carlo methods \citep[a bias-variance trade-off;][]{kearns2000bias}, reduces variance compared to a single $n$-step return \citep{daley2024averaging}, and imposes a recency heuristic by assigning less weight to temporally distant experiences \citep{daley2024demystifying}.
We denote the error between the $\lambda$-return target and the current value estimate by
\begin{equation}
    \label{eq:td_lambda_error}
    \delta^\lambda_t
    \defeq G^{\lambda}_{t} - \hat{v}(S_t,\vw_t)
    = \sum_{i=0}^\infty (\gamma \lambda)^i \delta_{t+i}
    \,,
\end{equation}
and refer to this quantity as the TD($\lambda$) error.
We note that in the context of recent works, the TD($\lambda$) error is often referred to as the generalized advantage estimate \citep[GAE;][]{schulman2015high}.

We start by defining $\BE$($\lambda$) using the TD($\lambda$) error.
For $\delta^\lambda_\pi(s) \defeq \E_\pi[\delta^\lambda_t \mid S_t=s]$, we define the $\BE$($\lambda$) analogously to \Cref{eq:be1} as
\begin{equation*}
    \BE(\vw,\lambda) \defeq \sum_{s \in \S} d(s) \, \delta^\lambda_\pi(s)^2
    \,.
\end{equation*}
Following the derivation of the $\PBE$ in \Cref{eq:gpbe}, with the definitions of $h$ and the new $\delta^\lambda_\pi(s)$, we can write the $\PBE$($\lambda$) objective as
\begin{equation}
    \label{eq:gpbe_lambda}
    \PBE(\vw,\lambda) \defeq \max_{h \in \H} \sum_{s \in S} d(s) \Big( 2 \delta^\lambda_\pi(s) \, h(s) - h(s)^2 \Big)
    \,.
\end{equation}
When $\lambda=0$, we recover the original one-step $\PBE$ objective \citep{patterson2022generalized}.
In the absence of function approximation, the $\PBE$ and the $\PBE$($\lambda$) objectives lead to the same solution, $v_\pi$, because their fixed points are both $v_\pi$. However, under function approximation when we cannot perfectly represent $v_\pi$, the choice of $\lambda$ impacts the minimum-error solution.
In practice, intermediate $\lambda$-values on the interval $(0,1)$ will balance between solution quality, learning speed, and variance.

\section{The Forward-View for Gradient TD($\lambda$) Methods}
\label{sec:fwd_view}
In this section, we develop several forward-view methods for optimizing the $\PBE$($\lambda$) under nonlinear function approximation.
Following the previous convention, we will overload the names GTD2($\lambda$) and TDC($\lambda$) introduced for the linear setting because we are strictly generalizing them to a broader function class. 

\textbf{GTD2(\texorpdfstring{$\lambda$}{lambda}):}
We derive this algorithm by taking the gradient of \Cref{eq:gpbe_lambda}
w.r.t.\ to both $\vw$ and $\vtheta$.
\begin{align*}
    \frac{1}{2} \grad{\vw} \sum_{s \in \S} d(s) \Big(2 \, \delta^\lambda_\pi(s) \, h(s) - h(s)^2 \Big)
    &= \sum_{s \in \S} d(s) h(s) \grad{\vw} \delta^\lambda_\pi(s)
    \,,\\
    \frac{1}{2} \grad{\vtheta} \sum_{s \in \S} d(s) \Big(2 \, \delta^\lambda_\pi(s) \, h(s) - h(s)^2 \Big)
    &= \sum_{s \in \S} d(s) \big( \delta^\lambda_\pi(s) - h(s) \big) \grad{\vtheta} h(s)
    \,.
\end{align*}
We get a stochastic gradient descent update by sampling these expressions. For brevity throughout, let $V_t \defeq \hat{v}(S_t,\vw_t)$ and $H_t \defeq \hat{h}(S_t,\vtheta_t)$. The resulting update is then
\begin{align}
    \label{eq:b-gtd2_forward_main}
    \Delta \vw_t &= - H_t \grad{\vw} \delta^\lambda_t
    \,,\\
    \label{eq:b-gtd2_forward_aux}
    \Delta \vtheta_t &= (\delta^\lambda_t - H_t) \grad{\theta} H_t
    \,.
\end{align}
GTD2($\lambda$) is a standard saddle-point update and should converge to a local optimum of the $\PBE$($\lambda$).

\textbf{TDC($\lambda$):}
For TDC(0), we obtained an alternative gradient correction by adding the term $(\delta_t - h(S_t))\grad{\vw} \hat{v}(S_t, \vw)$ to the GTD2(0) update.
This was motivated by the fact that $h(S_t)$ approximates $\delta_t$.
We take a similar approach here, adding $(\delta^\lambda_t - H_t) \grad{\vw} \hat{v}(S_t,\vw_t)$ to the GTD2($\lambda$) update for $\vw$:
\looseness=-1
\begin{align}
    \nonumber
    \Delta \vw_t &= (\delta^\lambda_t - H_t) \grad{\vw} V_t - H_t \grad{\vw} \delta^\lambda_t\\
    \label{eq:tdc_main}
    &= \delta^\lambda_t \grad{\vw} V_t - H_t \grad{\vw} (V_t + \delta^\lambda_t)
    \,.
\end{align}
The $\vtheta$-update remains the same as \Cref{eq:b-gtd2_forward_aux}.
The result is the sum of a semi-gradient TD($\lambda$) update and a gradient correction.
However, the method is biased, as it assumes that $H_t$ has converged exactly to $\delta^\lambda_\pi(S_t)$.
This bias did not impact convergence of TDC in the linear setting, but as yet there is no proof of convergence of TDC in the nonlinear setting.
Similarly, it is not yet clear what the ramifications are of using TDC($\lambda$) rather than GTD2($\lambda$), although, in our experiments, we find it is better empirically.

\textbf{TDRC($\lambda$):} Finally, we extend the TDRC algorithm, and the extension simply involves adding a regularization penalty with coefficient $\beta \geq 0$ to the update for $h$:
\begin{equation}
    \Delta \vtheta_t = (\delta^\lambda_t - H_t) \grad{\vtheta} H_t - \beta \vtheta_t
    \,.
    \label{eq:tdrc_theta}
\end{equation}
All the methods we derived in this section depend on the forward-view of the $\lambda$-return from \Cref{eq:lambda-return}, which means they need a trajectory of transitions to make an update.
This makes these methods appealing when there is a replay buffer to store and sample these trajectories.
Further, the trajectories should be on-policy to avoid the need to incorporate importance sampling ratios.
It is not difficult to incorporate importance sampling (we include these extensions in \Cref{app:is_correc}), but significant variance may arise when using importance sampling, which degrades performance in practice.
In the next section, we incorporate these forward-view updates into PPO, an algorithm that makes use of both replay and on-policy trajectories.

\section{Gradient PPO: Using the Forward-View in Deep RL}
\label{sec:gppo}
In this section, we introduce a new algorithm, called Gradient PPO, that modifies the PPO  algorithm \citep{{schulman2017proximal}} to incorporate the forward-view gradient updates derived in the last section.

\subsection{Gradient PPO}
Proximal Policy Optimization \citep[PPO;][]{schulman2017proximal} is a widely used policy-gradient method that learns both a parameterized policy, the actor, and an estimate for the state-value function, the critic.
In PPO, the agent alternates between collecting a fixed-length trajectory of interactions and performing batch updates using that trajectory to learn both the policy and the state-value function.
We will focus on the critic component of PPO, as that is the part learning the value function, and we will modify it to use the gradient-based methods introduced in \Cref{sec:fwd_view}.

PPO updates depend on the Generalized Advantage Estimate \citep[GAE;][]{schulman2015high}, which is identical to the TD($\lambda$) error in \Cref{eq:td_lambda_error}.
In practice, however, PPO updates must truncate the GAE due to the finite length of the collected experience trajectory.
Given a trajectory of length $T$, the truncated GAE can be written as $\delta^\lambda_{t:T} = \sum_{i=0}^{T-t-1} (\gamma \lambda)^i \delta_{t+i}$, and we can form an estimate for the $\lambda$-return using that truncated GAE as:
\begin{equation}
    G^{\lambda}_{t:T} \defeq \hat{v}(S_t, \vw) + \sum_{k=0}^{T-t-1} (\gamma \lambda)^k \delta_{t+k}
    \,.
\end{equation}
The value-function objective for PPO can then be written as follows:
\begin{equation}
~\label{eq:ppo_obj}
    L^{\text{\tiny{PPO}}}_t(\vw_t) = \frac{1}{2} \left(\hat{v}(S_t, \vw_t) - G^{\lambda}_{t:T}\right)^2
    \,,
\end{equation}
PPO typically uses a stale target for $G^{\lambda}_{t:T}$. i.e., the $\lambda$-return target is computed once from the collected trajectory and is kept fixed for all the training epochs on that trajectory.
Although many PPO implementations heuristically clip this loss, we remove this component from our algorithm for simplicity.

We now introduce \emph{Gradient PPO}, which changes the critic update for PPO to allow for Gradient TD($\lambda$) updates.
Gradient PPO introduces the following three changes.

\textbf{Modification 1:}
We change the updates to the critic parameters, $\vw$, to use the updates in \Cref{eq:tdc_main}.

\textbf{Modification 2:} We introduce an auxiliary network with parameters $\vtheta$ to learn the auxiliary variable $\hat{h}$, and update the parameters using \Cref{eq:tdrc_theta}.\footnote{An alternative to implementing the direct parameter updates is to write corresponding loss functions and use automatic differentiation libraries to compute the updates. We provide the corresponding loss functions to our updates in \Cref{app:algos}.}
    
\textbf{Modification 3:}
We need to compute the gradient for $\delta^{\lambda}_{t}$.
As a result, we cannot use a stale target as in \Cref{eq:ppo_obj}.
Instead, we need to recompute $\delta^{\lambda}_t$ and its gradient after each update. We do this by sampling sequences from the minibatch instead of sampling independent samples.
We then compute a truncated $\delta^{\lambda}_{t:\tau}$ based on the sampled sequences.
In this case, the effective truncation for the $\lambda$-return is the length of the sequence sampled from a minibatch, $\tau$, rather than the full trajectory length $T$.
A similar approach to incorporate the $\lambda$-return with replay buffers was previously introduced \citep{daley2019reconciling}.
This approach might seem computationally expensive at first since $\vw$ is used to compute all the values included in $\hat{\delta}^\lambda_{t:\tau}$ estimation.
However, a nice property of the gradient $\nabla_{\vw}\hat{\delta}^\lambda_{t:\tau}$ is that it can be easily computed recursively as follows:
\begin{equation*}
    \nabla_{\vw}\delta_t^\lambda = \gl \nabla_{\vw} \delta^\lambda_{t+1} + \nabla_{\vw} \delta_t.
\end{equation*}
Then, given a sequence of length $\tau$, $\nabla_{\vw}\delta_t^\lambda$ and $\delta_t^\lambda$ can be estimated using \Cref{alg:tdrc_estimation}, where lines in \green{green} highlight the additional computations required for Gradient PPO per a minibatch update. 

Implementations for Gradient PPO can simply pass the newly defined loss functions, \Cref{eq:gppo_1} and \Cref{eq:gppo_2}, directly to an automatic differentiation. But implementations based on \Cref{alg:tdrc_estimation} might be more efficient as it allows for parallel computations of the values for all states. We show in \Cref{app:gppo} that using this parallel approach for computation results in Gradient PPO having the same Steps Per Second (SPS) cost as PPO; there is no drop in runtime from these additional calculations.
We also provide a full algorithm for PPO and Gradient PPO in \Cref{app:algos}. 

\begin{algorithm}[!hbt]
    \caption{Estimating TDRC($\lambda$) Updates for Gradient PPO}~\label{alg:tdrc_estimation}
    \begin{algorithmic}
    \State{Input: A sequence of states, $s_t,\ldots s_{t+\tau}$.}
    \State{Input: The current weight parameters of the value function, $\vw$. }
    \State{For all samples in the sequence, compute $\hat{v}(s_t, \vw)$ and $\nabla_{\vw}\hat{v}(s_t, \vw)$.}\\ \Comment{This step is done in parallel by creating a batch of all observations.}
    \For{$j=t+\tau -1$, \ldots, t} 
        \State{$\delta_j = R_{j+1} + \gamma \hat{v}(s_{j+1},\vw) - \hat{v}(s_{j},\vw)$}
        \State{\green{$\nabla{\delta}_{j} = R_{j+1} + \gamma \nabla{\hat{v}}(s_{j+1},\vw) - \nabla{\hat{v}}(s_{j},\vw)$}}
        \State{$\delta_j^\lambda = \delta_j + \gamma \lambda \delta_{j+1}^\lambda$}
        \State{\green{$\nabla{\delta}_j^{\lambda} = \nabla{\delta}_j + \gamma \lambda \nabla{\delta}_{j+1}^{\lambda}$}}
    \EndFor{}
    \end{algorithmic}
\end{algorithm}

\subsection{Empirical Investigation of Gradient PPO}
We evaluate the performance of Gradient PPO across several environments from the MuJoCo Benchmark~\citep{todorov2012mujoco}. For Gradient PPO, we performed a hyperparameter sweep for the actor learning rate, the critic learning rate, and $\lambda$. For the auxiliary variable $h$, we used the same learning rate as the critic. We tested each hyperparameter configuration on all environments and repeated the experiments across $5$ seeds.
Finally, based on the sweep results, we selected a single hyperparameter configuration that worked reasonably well across all environments and evaluated it for $30$ more seeds---the {\em two-stage approach} \citep{patterson2024empirical}. We provide the ranges of values we swept over in \Cref{app:gppo} and the hyperparameters configuration that we will use in all Gradient PPO experiments in \Cref{tab:gppo_hypers}. For PPO, we used the default hyperparameters commonly used for PPO with MuJoCo environments~\citep{shengyi2022the37implementation}, listed in Table~\ref{tab:ppo_hypers}. 

Figure~\ref{fig:ppo_mujoco_main_res} shows the Gradient PPO and Default PPO results across four MuJoCo environments. In Ant and HalfCheetah, Gradient PPO clearly outperforms PPO. Both algorithms perform similarly in Walker and Hopper.

\begin{figure}[htb]
    \centering
    \includegraphics[width=\linewidth]{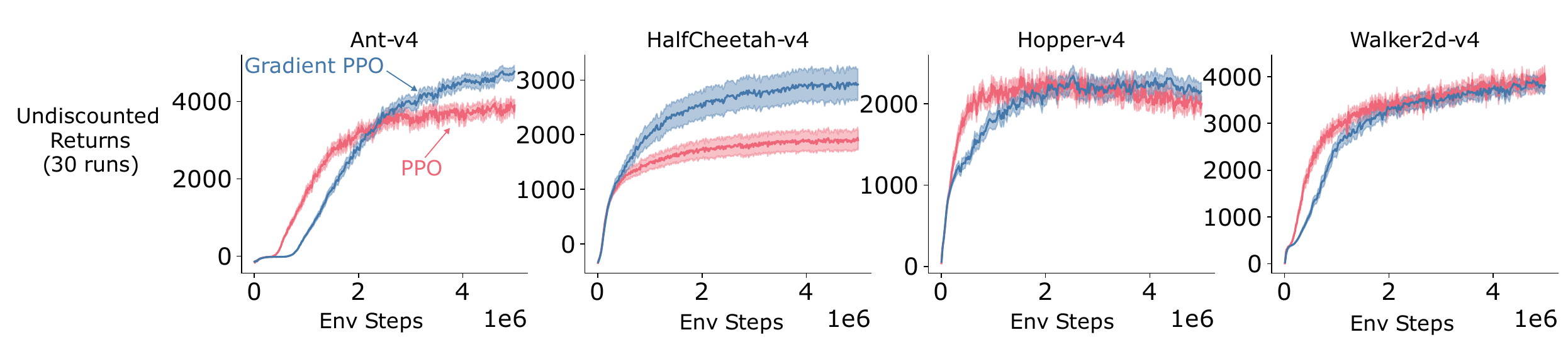}
    \caption{Gradient PPO and PPO evaluated on four MuJoCo environments. The solid lines are the mean performance averaged over $30$ seeds, and the shaded area is the standard error.}
    \label{fig:ppo_mujoco_main_res}
\end{figure}
We also investigated the utility of using TDRC($\lambda$) instead of TDC($\lambda$) and GTD2($\lambda$) to estimate the critic. 
\Cref{fig:ppo_mujoco_var} shows the results with these variations. There is a marked difference in performance, suggesting that both gradient corrections and regularization are needed for improved performance when using gradient-based methods. This outcome aligns with our discussion in \Cref{sec:background} and \Cref{sec:fwd_view} about how TDRC has been shown to outperform TDC, which in turn outperforms GTD2.

\begin{figure}[htb]
    \centering
    \includegraphics[width=\linewidth]{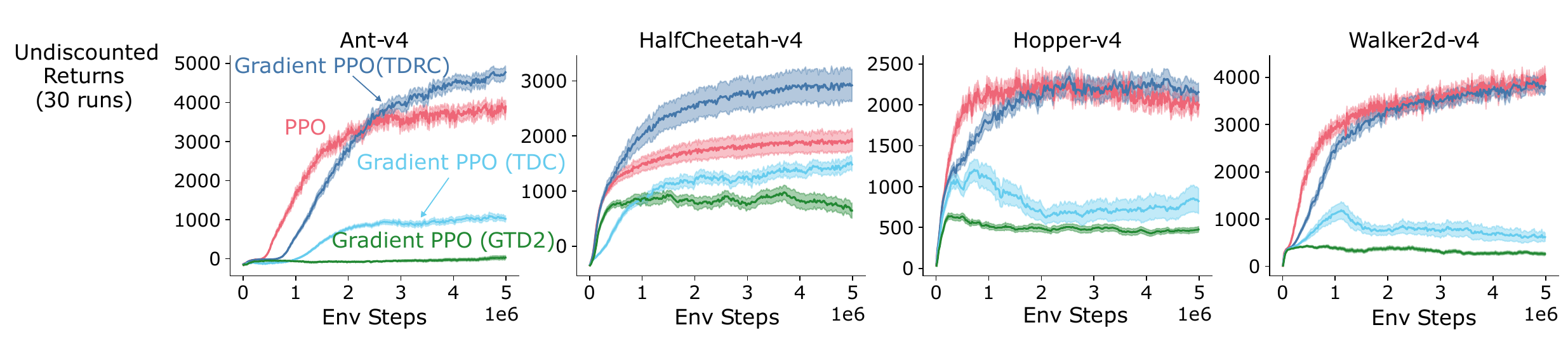}
    \caption{Gradient PPO variations evaluated on $4$ MuJoCo environments. The solid lines are the mean performance averaged over $30$ seeds, and the shaded area is the standard error.}
    \label{fig:ppo_mujoco_var}
\end{figure}

\begin{table}[b]
    \caption{Forward- and backward-view updates of our three proposed Gradient TD($\lambda$) algorithms for prediction with nonlinear function approximation.}
    \begin{center}
        \renewcommand{\arraystretch}{1.5}  
        \begin{tabular}{llll}
            \toprule
            {\bf Algorithm} & {\bf View} & $\Delta \vw_t$ & $\Delta \vtheta_t$ \\
            \midrule
            \multirow{2}{*}{GTD2($\lambda$)}
                & Forward & \green{$- H_t \grad{\vw} \delta^\lambda_t$} & \green{$(\delta^\lambda_t - H_t) \grad{\vtheta} H_t$} \\
                & Backward & \green{$-z^h_t \grad{\vw} \delta_t$} & \green{$\delta_t \vz^\vtheta_t - H_t \grad{\vtheta} H_t$} \\
            \midrule
            \multirow{2}{*}{TDC($\lambda$)}
                & Forward & \blue{$\delta^\lambda_t \grad{\vw} V_t$} \green{$- H_t \grad{\vw}$} \blue{$(V_t$}\green{ $+ \delta^\lambda_t)$} & \green{$(\delta^\lambda_t - H_t) \grad{\vtheta} H_t$} \\
                & Backward & \blue{$\delta_t \vz^\vw_t - H_t \grad{\vw} V_t$}\green{$ - z^h_t \grad{\vw} \delta_t$} & \green{$\delta_t \vz^\vtheta_t - H_t \grad{\vtheta} h_t$} \\
            \midrule
            \multirow{2}{*}{TDRC($\lambda$)}
                & Forward &\blue{$\delta^\lambda_t \grad{\vw} V_t$} \green{$- H_t \grad{\vw}$} \blue{($V_t$}\green{ $+ \delta^\lambda_t)$} & \green{$(\delta^\lambda_t - H_t) \grad{\vtheta} H_t $}\orange{$- \beta \vtheta_t$} \\
                & Backward & \blue{$\delta_t \vz^\vw_t - H_t \grad{\vw} V_t$}\green{$ - z^h_t \grad{\vw} \delta_t$} & \green{$\delta_t \vz^\vtheta_t - H_t \grad{\vtheta} H_t$} \orange{$ - \beta \vtheta_t$} \\
            \bottomrule
        \end{tabular}
    \end{center}
    \label{tab:prediction_algorithms}
\end{table}

\section{The Backward View for Gradient TD($\lambda$) Methods}
\label{sec:backward_view}
The forward-view algorithms we have derived so far have updates that depend on future information, making them unrealizable without the delay introduced by experience replay.
Alternatively, we can use eligibility traces via backward-view algorithms that incrementally generate the correct parameter updates on each time step.
We now derive the backward view algorithms for optimizing $\PBE$($\lambda$).

\textbf{GTD2}($\lambda$): As we prove below, the following backward-view updates are equivalent to the forward-view updates given in Eq. \ref{eq:b-gtd2_forward_main} and Eq. \ref{eq:b-gtd2_forward_aux}:
\begin{align}
    \label{eq:gtd2_backward_main}
    \Delta \vw_t &\defeq - z^h_t \grad{\vw} \delta_t
    \,,\\
    \label{eq:gtd2_backward_aux}
    \Delta \vtheta_t &\defeq \delta_t \vz^\vtheta_t - H_t \grad{\vtheta} H_t
    \,,
    \intertext{where, for $z^h_{-1} \defeq 0$, and $\vz^\vtheta_{-1} \defeq \vzero$, }
    \label{eq:z_w_sca}
    z^h_t &\defeq \gl z^h_{t-1} + H_t
    \,,\\
    \label{eq:gtd2_trace_aux}
    \vz^\vtheta_t &\defeq \gl \vz^\vtheta_{t-1} + \grad{\vtheta} H_t
    \,,
\end{align}
We show in the following theorem that this backward-view algorithm generates the same total parameter updates as the forward view under standard assumptions.
\begin{restatable}[]{theorem}{gtdfbequiv}
    \label{theorem:gtd2_fb_equiv}
    Assume the parameters $\vw$ and $\vtheta$ do not change during an episode of environment interaction.
    The forward and backward views of \mbox{GTD2($\lambda$)} are equivalent in the sense that they generate equal total parameter updates:
    \begin{align}
        \label{eq:fb_equiv_main}
        \sum_{t=0}^\infty H_t \grad{\vw} \delta^\lambda_t
        &= \sum_{t=0}^\infty z^h_t \grad{\vw} \delta_t
        \,,\\
        \label{eq:fb_equiv_aux}
        \sum_{t=0}^\infty (\delta^\lambda_t - H_t) \grad{\vtheta} H_t
        &= \sum_{t=0}^\infty (\delta_t \vz^\vtheta_t - H_t \grad{\vtheta} H_t)
        \,.
    \end{align}
\end{restatable}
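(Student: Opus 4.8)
The plan is to reduce both claimed identities to the standard forward–backward equivalence for eligibility traces, exploiting the two expansions already available in the excerpt. First I would unroll the traces into closed form: from the recursion in \Cref{eq:z_w_sca}, $z^h_t = \gl z^h_{t-1} + H_t$ with $z^h_{-1} \defeq 0$, a one-line induction gives $z^h_t = \sum_{k=0}^t (\gl)^{t-k} H_k$, and the identical argument applied to \Cref{eq:gtd2_trace_aux} gives $\vz^\vtheta_t = \sum_{k=0}^t (\gl)^{t-k} \grad{\vtheta} H_k$. On the forward side I would use the series form of the TD($\lambda$) error from \Cref{eq:td_lambda_error}, namely $\delta^\lambda_t = \sum_{i=0}^\infty (\gl)^i \delta_{t+i}$, together with its termwise gradient $\grad{\vw} \delta^\lambda_t = \sum_{i=0}^\infty (\gl)^i \grad{\vw}\delta_{t+i}$, which is valid because $\vw$ is held fixed across the episode, so differentiation commutes with the sum.

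Then for \Cref{eq:fb_equiv_main} I would substitute the gradient expansion and reindex the double sum by $j \defeq t+i$:
\[
\sum_{t=0}^\infty H_t \grad{\vw}\delta^\lambda_t
= \sum_{t=0}^\infty \sum_{i=0}^\infty (\gl)^i H_t \grad{\vw}\delta_{t+i}
= \sum_{j=0}^\infty \Big( \sum_{t=0}^j (\gl)^{j-t} H_t \Big) \grad{\vw}\delta_j,
\]
and recognize the inner parenthesized sum as exactly the unrolled trace $z^h_j$, which is precisely the backward-view summand. The argument for \Cref{eq:fb_equiv_aux} is the same after first peeling off the $-H_t \grad{\vtheta} H_t$ term, which appears identically on both sides; this reduces the claim to $\sum_t \delta^\lambda_t \grad{\vtheta} H_t = \sum_t \delta_t \vz^\vtheta_t$. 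Substituting the series for $\delta^\lambda_t$ and performing the identical $j=t+i$ reindexing collapses the coefficient multiplying $\delta_j$ into precisely $\vz^\vtheta_j$, completing that identity.

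The main obstacle is justifying the interchange of the two infinite summations (equivalently, applying Fubini/Tonelli). I would handle this by combining the frozen-parameter assumption with the geometric factor $(\gl)^i$: with bounded rewards and gradients and $\gl < 1$, the double series is absolutely summable, so the reordering is legitimate; in the common case of episodes that terminate almost surely, the sums are in fact finite and the interchange is immediate. The remaining pieces — the induction establishing the closed form of the traces and the index bookkeeping in the reindexing — are routine, so the crux is really just setting up the summability needed to swap the order of summation.
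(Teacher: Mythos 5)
Your proposal is correct and follows essentially the same route as the paper's proof: expand $\delta^\lambda_t$ into its series of one-step TD errors, reorder the resulting double sum so that terms are grouped by the index of $\grad{\vw}\delta_j$ (respectively $\delta_j$), and recognize the collected coefficients as the unrolled eligibility traces, after the same preliminary reduction of the $\vtheta$-identity by cancelling the common $-H_t\grad{\vtheta}H_t$ term. The only differences are cosmetic: the paper regroups terms informally (and carries importance-sampling ratios for generality) where you make the reindexing $j=t+i$ and the closed-form trace explicit, and your attention to the summability justifying the interchange of sums is a detail the paper leaves implicit.
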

\begin{proof}
    See \Cref{app:gtd2_proof}.
\end{proof}
\textbf{TDC($\lambda$):}
Let us slightly rewrite $\Delta \vw_t$ from \Cref{eq:tdc_main} in the following way:
\begin{equation}\label{eq:tdc_forward}
    \underbrace{\delta^\lambda_t \grad{\vw} V_t}_{\text{TD($\lambda$)}}
    ~+~
    \underbrace{(- H_t \grad{\vw} V_t)}_{\substack{\text{instantaneous} \\ \text{correction}}}
    ~+~
    \underbrace{(- H_t \grad{\vw} \delta^\lambda_t)}_{\text{GTD2($\lambda$)}}
    .
\end{equation}
We see that $\Delta \vw_t$ from \Cref{eq:tdc_forward} decomposes into three terms:
forward-view semi-gradient TD($\lambda$) with off-policy corrections;
an instantaneous correction that does not require eligibility traces;
and GTD2($\lambda$)'s term for $\Delta \vw_t$, for which we already derived and proved a backward-view equivalence in \Cref{theorem:gtd2_fb_equiv}.
As a consequence, we immediately deduce that the backward view for TDC($\lambda$) is
\begin{align}
    \Delta \vw_t &\defeq \delta_t \vz^\vw_t - H_t \grad{\vw} V_t - z^h_t \grad{\vw} \delta_t
    ,
    \intertext{where}
    \vz^\vw_t &\defeq \gl \vz^\vw_{t-1} + \grad{\vw} V_t
    ,
\end{align}
and $z^h_t$ is the same as before in \Cref{eq:z_w_sca}.
$\Delta \vtheta_t$ is generated by \Cref{eq:gtd2_backward_aux}.

\textbf{TDRC($\lambda$):}
Likewise, the regularized backward-view $\vtheta$ update is
\begin{equation}
    \Delta \vtheta_t \defeq \delta_t \vz^\vtheta_t - H_t \grad{\vtheta} H_t - \beta \vtheta_t
    ,
\end{equation}
where $\vz^\vtheta_t$ is once again generated by \Cref{eq:gtd2_trace_aux}.
\Cref{tab:prediction_algorithms} summarizes the forward view and the backward view for all the algorithms introduced. We highlighted the update components that arise from directly taking the gradient of $\PBE(\lambda)$ in \green{green}, the gradient correction components in \blue{blue}, and the regularization component in \orange{orange}.

Finally, we note that the backward view algorithms presented here do indeed update on every step, unlike PPO, but the proof above only shows equivalence at the end of the episode, like the original forward-backward equivalence of TD($\lambda$).

\section{QRC($\lambda$): Using the Backward View in Deep RL}
\label{sec:qrc}
In this section,  we extend the backward-view methods to action values and present three control algorithms based on three backward-view updates presented earlier. Since these algorithms are based on the backward view, they can make immediate updates without delay. Hence, they can work effectively in settings where it is prohibitive to have a large experience replay buffer (i.e., on-edge devices and mobile robots). Additionally, unlike forward-view methods, which require us to use a truncated version of the updates, backward-view methods do not have this limitation.

\subsection{QRC($\lambda$)}
Extending the backward-view algorithms to action values is straightforward. Here, we present the extensions to Q($\lambda$), but similar extensions can be done to other action-value methods, such as SARSA($\lambda$). Note that similar changes can be made to action-value methods using the forward view. 

Consider an action-value network parameterized by $\vw$, and write the TD error as:
\begin{equation}
  \delta_t = R_{t+1} + \gamma \max_{a' \in \mathcal{A}}\hat{q}(S_{t+1},a',\vw_t) -\hat{q}(S_t,A_t,\vw_t) \,.   
  \label{eq:td_q}
\end{equation}
The gradient of the TD error becomes the following:
\begin{equation}
  \nabla_{\vw_t}\delta_t = \gamma \nabla_{\vw_t} \Big(\max_{a' \in 
\mathcal{A}}\hat{q}(S_{t+1},a',\vw_t)\Big) - \nabla_{\vw_t}\hat{q}
(S_t,A_t,\vw_t) \,.
\label{eq:td_q_grad}
\end{equation}
The auxiliary function for $h$ is now predicting a function of both the states and actions: $H_t \defeq h(s_t, a_t, \vtheta_t)$. Using these modifications, we can now write the updates for the control variant of TDRC($\lambda$), which we refer to as QRC($\lambda$):
\begin{align}
\begin{split}
    \vz_{t}^{\vw} &= \gamma \lambda \vz_{t-1}^{\vw} + \nabla_{\vw_t}\hat{q}(S_t,A_t,\vw_t)\\
    z_{t}^{h} &= \gamma \lambda z_{t-1}^{h} + H_t\\
    \vz_{t}^{\vtheta} &= \gamma \lambda \vz_{t-1}^{\vtheta} + \nabla_{\vtheta}H_t\\
    \Delta \vw_t &= \delta_t \vz^\vw_t - H_t \grad{\vw} \hat{q}(S_t,A_t,\vw_t) - z^h_t \grad{\vw} \delta_t\\
    \Delta \vtheta_t &= \delta_t \vz^\vtheta_t - H_t \grad{\vtheta} H_t - \beta \vtheta_t\\
\end{split}
\label{eq:traces_q}
\end{align}
We can modify these updates to get QC($\lambda$), an update based on TDC($\lambda$), by simply setting $\beta = 0$. We can also get GQ($\lambda$), an update based on GTD2($\lambda$) by setting $\beta = 0$ and removing the gradient correction term (see \Cref{tab:prediction_algorithms}). Finally, we follow Watkins' Q($\lambda$) in that we decay the traces as described in the previous equations when a greedy action is selected and reset the traces to zero when a non-greedy action is selected~\citep{watkins1989learning}.

\subsection{Empirical Investigation of QRC($\lambda$)}
We evaluated the performance of QRC($\lambda$) across all the environments from the MinAtar benchmark~\citep{young2019minatar}. We compared the performance with Watkin's Q($\lambda$)~\citep{watkins1989learning} and StreamQ algorithm~\citep{elsayed2024streaming}, a recent algorithm combining Q($\lambda$) with a new optimizer and an initialization scheme for better performance in streaming settings.

For Q($\lambda$) and QRC($\lambda$), we used SGD and performed a hyperparameter sweep for different values for the step size and $\lambda$. We tested each hyperparameter configuration in all environments and across $5$ seeds. We then selected a single hyperparameter configuration that worked well across all environments, and we evaluated it for 30 more seeds in all environments. We provide the ranges and the final hyperparameters we used in Appendix~\ref{app:minatar}. For StreamQ, we used the hyperparameters suggested by the paper and the accompanying code, and we repeated the experiments for $30$ seeds in all environments. Figure~\ref{fig:minatar} shows the performance of all three algorithms across the $5$ MinAtar environments, and in all environments, QRC($\lambda$) outperforms both StreamQ and Q($\lambda$).

\begin{figure}[htb]
    \centering
    \includegraphics[width=\linewidth]{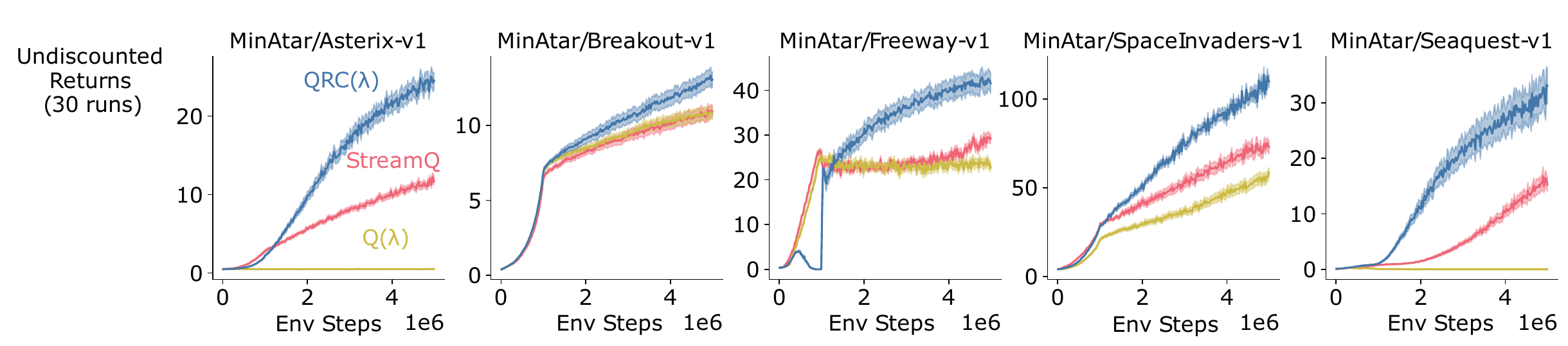}
    \caption{QRC($\lambda$), Q($\lambda$) and StreamQ algorithms evaluated on the five MinAtar environments. The solid lines are the mean performance averaged over $30$ seeds, and the shaded regions are the corresponding standard errors.}
    \label{fig:minatar}
\end{figure}

We evaluated the other two gradient-based algorithms, QC($\lambda$) and GQ2($\lambda$).
\Cref{fig:minatar_all} shows the results of this evaluation.
The results are consistent with forward-view results in \Cref{sec:gppo} in that having both the gradient correction and the regularization is needed for better performance.
However, here the regularization is not as critical as it was for Gradient PPO. 

\begin{figure}[htb]
    \centering
    \includegraphics[width=\linewidth]{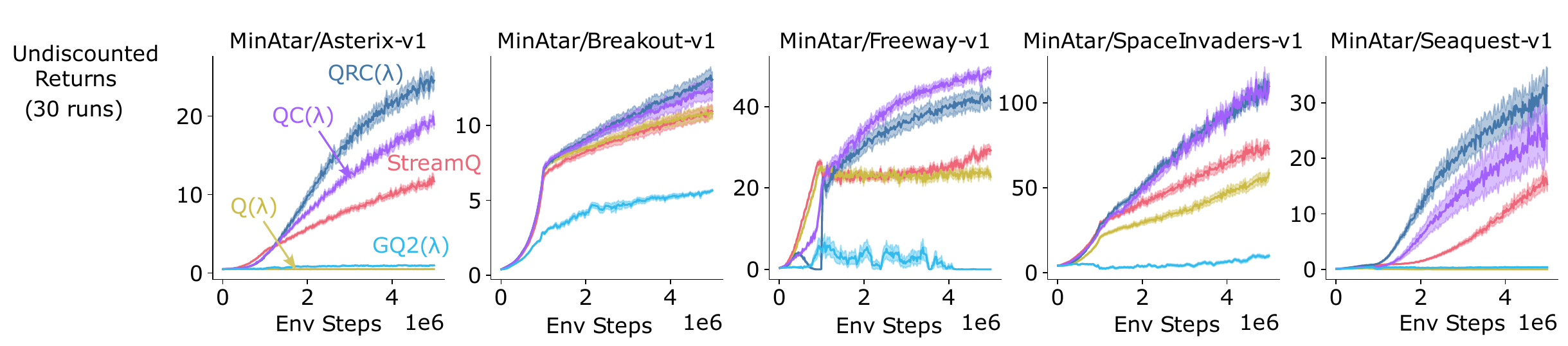}
    \caption{All gradient-based backward view algorithms evaluated on the $5$ MinAtar environments. The solid lines are the mean performance averaged over $30$ seeds, and the shaded regions are the corresponding standard errors.}
    \label{fig:minatar_all}
\end{figure}

\section{Conclusion}
We proposed the $\PBE$($\lambda$) objective, a multistep generalization of the Generalized Projected Bellman Error \citep{patterson2022generalized} based on the $\lambda$-return. We derived three algorithms for optimizing the new objective both in the forward view and in the backward view. Of the three algorithms we developed, we showed that TDRC($\lambda$) is stable, fast, and results in a high-quality solution. We introduced two Deep RL algorithms that use the newly derived update rules, and we showed that our new algorithms outperform both PPO with a buffer and streaming algorithms without replay buffers. Further work remains to verify the convergence guarantees for TDC($\lambda$) and TDRC($\lambda$), and extend the gradient-based updates to more Deep RL algorithms.


\appendix



\section{Proof of \Cref{theorem:gtd2_fb_equiv}}
\label{app:gtd2_proof}

\gtdfbequiv*
In the proof below, we added importance sampling for generality.
\begin{proof}
    We start by showing \Cref{eq:fb_equiv_main} holds.
    Note that
    \begin{align}
        H_t \grad{\vw} \delta^\lambda_t
        &= H_t \rho_t \grad{\vw} \delta_t + H_t \gl \rho_t \rho_{t+1} \grad{\vw} \delta_{t+1} + H_t (\gl)^2 \rho_t \rho_{t+1} \rho_{t+2} \grad{\vw} \delta_{t+2} \dots
        .
    \end{align}
    The total sum of these forward-view contributions is therefore
    \begin{align}
        \nonumber
        \sum_{t=0}^\infty H_t \grad{\vw} \delta^\lambda_t
        &= (H_0 \rho_0 \grad{\vw} \delta_0 + H_0 \gl \rho_0 \rho_1 \grad{\vw} \delta_1 + \dots)
        + (H_1 \rho_1 \grad{\vw} \delta_1 + H_1 \gl \rho_1 \rho_2 \grad{\vw} \delta_2 + \dots)
        + \dots \\
        &= (H_0 \rho_0) \grad{\vw} \delta_0 + (H_0 \gl \rho_0 \rho_1 + H_1 \rho_1) \grad{\vw} \delta_1
        + \dots \\
        &= z^h_0 \grad{\vw} \delta_0 + z^h_1 \grad{\vw} \delta_1 + \dots \\
        &= \sum_{t=0}^\infty z^h_t \grad{\vw} \delta_t
        ,
    \end{align}
    which proves \Cref{eq:fb_equiv_main}.
    Next, consider \Cref{eq:fb_equiv_aux}.
    Notice that the equality holds if and only if
    \begin{equation}
        \label{eq:fb_aux_helper}
        \sum_{t=0}^\infty {\delta}^\lambda_t \grad{\vtheta} H_t
        = \sum_{t=0}^\infty \delta_t z^\vtheta_t
        ,
    \end{equation}
    and further note that
    \begin{align}
        \delta^\lambda_t \grad{\vtheta} H_t
        &= \rho_t \delta_t \grad{\vtheta} H_t + \gl \rho_t \rho_{t+1} \delta_{t+1} \grad{\vtheta} H_t  + (\gl)^2 \rho_t \rho_{t+1} \rho_{t+2} \delta_{t+2} \grad{\vtheta} H_t + \dots
        .
    \end{align}
    The total sum of these forward-view contributions is therefore
    \begin{align}
        \sum_{t=0}^\infty \delta^\lambda_t \grad{\vtheta} H_t
        &= (\rho_0 \delta_0 \grad{\vtheta} H_0 + \gl \rho_0 \rho_1 \delta_1 \grad{\vtheta} H_0 + \dots)
        + (\rho_1 \delta_1 \grad{\vtheta} H_1 + \gl \rho_1 \rho_2 \delta_2 \grad{\vtheta} H_1 + \dots)
        + \dots \\
        &= \delta_0 (\rho_0 \grad{\vtheta} H_0) + \delta_1 (\gl \rho_0 \rho_1 \grad{\vtheta} H_0 + \rho_1 \grad{\vtheta} H_1) + \dots \\
        &= \delta_0 z^\vtheta_0 + \delta_1 z^\vtheta_1 + \dots \\
        &= \sum_{t=0}^\infty \delta_t z^\vtheta_t
        ,
    \end{align}
    which establishes \Cref{eq:fb_aux_helper} to prove \Cref{eq:fb_equiv_aux} and complete the proof.
\end{proof}




\subsubsection*{Acknowledgments}
\label{sec:ack}
We would like to thank NSERC, CIFAR, and Amii for research funding and the Digital Research Alliance of Canada for the
computational resources. We would also like to thank Vlad Tkachuk for providing feedback on the paper draft.



\bibliography{main}
\bibliographystyle{rlj}

\beginSupplementaryMaterials

%
%
\section{Gradient TD($\lambda$) with Importance Sampling Correction}\label{app:is_correc}
We now discuss the modifications needed when the experiences $(S_t,A_t,R_t,S_{t+1})$ are collected by a behaviour policy $b$ rather than the target policy $\pi$.
Letting $\rho_t = \frac{\pi(A_t|S_t)}{b(A_t|S_t)}$ be the importance sampling ratio at time $t$, we can scale the TD error by this factor to form a bias-corrected TD error $\hat{\delta}_t \defeq \rho_t \delta_t$, since $\E_b[\rho_t \delta_t | S_t=s] = \E_\pi[\delta_t | S_t=s] = \vdelta(s)$ \citep{precup2000eligibility}.
By induction, it follows that the bias-corrected TD($\lambda$) error is
\begin{equation}
    \label{eq:td_lambda_hat}
    \hat{\delta}^\lambda_t
    \defeq \sum_{i=0}^\infty (\gl)^i \! \left(\prod_{j=0}^i \rho_{t+j}\right) \! \delta_{t+i}
    = \rho_t (\gl \hat{\delta}^\lambda_{t+1} + \delta_t).
\end{equation}

The backward-view traces will then be defined as follows:
\begin{align}
    z^h_t &\defeq \rho_t(\gl z^h_{t-1} + h_t)
    \,,\\
    \vz^\vtheta_t &\defeq \rho_t(\gl \vz^\vtheta_{t-1} + \grad{\vtheta} h_t)
    \,,
\end{align}
\section{Forward-View Algorithms}\label{app:algos}
In this section, we provide the pseudocode for the forward-view algorithms.
We start with PPO in algorithm~\ref{alg:full_ppo}. PPO alternates between two main components: collecting a fixed-length trajectory of interactions using the current policy, and performing several steps of gradient updates using the collected trajectory.
The gradient updates involve updating the value function towards an estimate of the $\lambda$-return based on the collected trajectory, and updating the policy parameters using the log-likelihood ratio. These steps are illustrated in \Cref{alg:full_ppo}. In the algorithm, we refer to the policy and value function parameters used during the collection of the trajectory as $\otheta$ and $\oldw$, respectively, while we refer to the most recent policy and value parameters as $\neww$ and $\ntheta$, those would be the result of the most recent mini-batch update.

\Cref{alg:gppo} shows the modifications needed to combine PPO with TDRC($\lambda$) to produce Gradient PPO. We highlight the main changes over the PPO algorithm in \blue{blue}. Gradient PPO introduces three new parameters: 1) Truncation length, $T$, which represents the sequence length used to compute the $\lambda$-returns. 2) Learning rate for the auxiliary variable $h$, $\alpha_h$. 3) Regularization coefficient $\beta$, we found that simply setting $\beta = 1$ worked well for all the experiments we presented.
Additionally, for the gradient updates, we construct a mini-batch of sequences and estimate the $\lambda$-returns per sequence. Note that a major change over PPO is that the $\lambda$-returns are estimated per minibatch using the latest parameters rather than stale estimates. As mentioned in the main paper, this change allows us to take the gradient of the $\delta^{\lambda} $ with respect to the latest parameters, which is needed for updates of the GTD algorithms.

We can write two loss functions that correspond to the parameter updates in \Cref{alg:gppo}. These losses can be implemented in most machine learning libraries, such as PyTorch \citep{paszke2019pytorch} and Jax \citep{jax2018github}, where it is sometimes easier to implement a loss function instead of the parameter updates.
We can write an objective based on TDRC($\lambda$) as follows:
\begin{equation}
    \label{eq:gppo_1}
    L_t(\vw_t) = \hat{h}(S_t,\vtheta_t)\delta^\lambda_{t:T} - \text{sg}\left(\delta^\lambda_{t:T} - \hat{h}(S_t,\vtheta_t)\right) \hat{v}(S_t,\vw_t).
\end{equation}
Where $\text{sg}$ refers to a stop gradient operation. Minimizing $L_t(\vw_t)$ results in a parameter update equivalent to \Cref{eq:tdc_main}.

We can also write an objective function for the auxiliary variable $\hat{h}$, which can be written as:
\begin{equation}
    \label{eq:gppo_2}
     L_t(\vtheta_t) = \frac{1}{2}\left({\delta}^\lambda_{t:T} - \hat{h}(S_t,\vtheta_t)\right)^2 + \frac{\beta}{2} \|\vtheta_t\|^2.
\end{equation}
Minimizing $L_t(\vtheta_t)$ results in a parameter update equivalent to \Cref{eq:tdrc_theta}.

\begin{algorithm}[!hbt]
    \caption{PPO Algorithm}~\label{alg:full_ppo}
    \begin{algorithmic}
        \State{Input: a differentiable policy parametrization $\pi(a|s,\pmb{\theta})$}
        \State{Input: a differentiable state-value function parametrization $\hat{v}(s,\mathbf{w})$}
        \State{Algorithm parameters: learning rate $\alpha$, rollout length $\tau$, mini-batch size $n$, number of epochs $k$, value coefficient $c_1$, entropy coefficient $c_2$, clip coefficient $\epsilon$, max gradient norm $c$.}
        \For{iteration $= 1, 2, \cdots$}
            \State{Run $\pi_{\text{old}}(a|s,\otheta)$ for $\tau$ steps.}\Comment{ Collect a trajectory of interactions}
            \State{Calculate $\hat{v}(s_{t+\tau},\oldw)$} \Comment{For bootstrapping}
            \State{Set $\textit{\^A}_{t+\tau}^{(\gamma,\lambda)} = 0$} \Comment{initialization GAE estimate.}
            \For{$j=t+\tau -1$, \ldots, t} \Comment{Calculating GAE using the collected trajectory of interactions.}
            \State{$\delta_j = R_{j+1} + \gamma \hat{v}(s_{j+1},\oldw) - \hat{v}(s_{j},\oldw)$}
            \State{$\textit{\^A}_j^{(\gamma,\lambda)} = \delta_j + \gamma \lambda \textit{\^A}_{j+1}^{(\gamma,\lambda)}$}
            \State{$\hat{G}_{j}^\lambda = \textit{\^A}_j^{(\gamma,\lambda)} + \hat{v}(s_{j},\oldw)$}
            \EndFor{}
            \For{epoch = 1, \ldots, k} \Comment{Learning}
                \State{Shuffle the transitions}
                \State{Divide the data into $m$ mini-batches of size $n$, where $m = \tau /n$.}
                \For{mini-batch $= 1, \ldots, m$ }
                    \State{Calculate: $\log \pi_{new}(a|s, \ntheta)$, $\hat{v}(s,\neww)$ for samples in the mini-batch.}
                    \State{Normalize $\textit{\^A}^{(\gamma,\lambda)}$ estimates per batch.}
                    \State{\textbf{Policy objective:} $L_{\text{p}} = - \frac{1}{n}\sum_{j=1}^{n} \min(r_{j}\textit{\^A}^{(\gamma,\lambda)}_{j,\oldw} ,${$ \text{clip}_{\epsilon}(r_{j}) \textit{\^A}_{j,\oldw}^{(\gamma,\lambda)} )$},}
                    \State{where $r_j= \frac{\pi(a_j|s_j,\ntheta)}{\pi(a_j|s_j,\otheta)}$ {and $\text{clip}_{\epsilon}(r_j) = \text{clip}(r_j, 1-\epsilon,1+\epsilon)$.}}

                    \State{\textbf{Value objective:} {$L_{\text{v}} = \frac{1}{n}\sum_{j=1}^{n} \max({(\hat{v}(s_{j},\neww) - \hat{G}^{\lambda}_{j,\oldw})}^{2},${$ {(\text{clip}_{\epsilon}(\hat{v}) - \hat{G}^{\lambda}_{j})}^{2})$}},
                    \State{where $\text{clip}_{\epsilon}(\hat{v}) = \text{clip}(\hat{v}(s_{j},\neww), 1-\epsilon,1+\epsilon)$}}
                    
                    \State{\textbf{Calculate the entropy of the policy:} $L_s =\frac{1}{n}\sum_{j=1}^{n}S(\pi(s_j,\ntheta))$}
                    \State{\textbf{Calculate the total loss:} $L = L_{\text{p}} + c_{1}L_{\text{v}} - c_{2} L_s$}
                    \State{\textbf{Calculate the gradient }$\hat{g}$}
                    \If{$\lVert\hat{g}\rVert > c$ }
                    \State{$\hat{g} \gets \frac{c}{\lVert\hat{g}\rVert}\hat{g}$}
                    \EndIf{}
                    \State{Update the parameters using the gradient to minimize the loss function.}
                \EndFor{}    
            \EndFor{}
        \EndFor{}
    \end{algorithmic}
  \end{algorithm}

\begin{algorithm}[hbt!]
    \caption{Gradient PPO: PPO with TDRC($\lambda$) }~\label{alg:gppo}
    \begin{algorithmic}
        \State{Input: a differentiable policy parametrization $\pi(a|s,\vtheta)$}
        \State{Input: a differentiable state-value function parametrization $\hat{v}(s,\mathbf{w})$}
        \State{Input: a differentiable auxiliary function parametrization $\hat{h}(s,\vtheta_h)$}
        \State{Algorithm parameters: learning rate $\alpha$, rollout length $\tau$, mini-batche size $n$, number of epochs $k$, entropy coefficient $c_2$, clip coefficient $\epsilon$, max gradient norm $c$, \blue{Truncation Length $T$, $h$ learning rate $\alpha_h$, regularization coefficient $\beta = 1$ }}
        \For{iteration $= 1, 2, \cdots$}
            \State{Run $\pi_{\text{old}}(a|s,\otheta)$ for $\tau$ steps.}\Comment{ Collect a trajectory of interactions}
            \State{Calculate $\hat{v}(s_{t+\tau},\oldw)$} \Comment{For bootstrapping}
            \blue{
            \State{Construct a batch of $\frac{\tau}{T}$ sequences, where each sequence is: \\
            $\langle s_i, a_{i+1} , R_{i+1}, \log \pi_{old}(a_{i+1}|s_i,\otheta), \hat{v}(s_{i},\oldw) \rangle, \ldots$\\$ \langle s_{i+T}, a_{i+T+1} , R_{i+T+1}, \log \pi_{old}(a_{i+T+1}|s_{i+T},\otheta), \hat{v}(s_{i+T},\oldw)\rangle$    } }
            \For{epoch = 1, \ldots, k} \Comment{{Learning}}
                \State{Shuffle the sequences}
                \State{Divide the data into $m$ mini-batches of size $n$, where $m = \tau /(n*T)$.}
                \For{mini-batch $= 1, \ldots, m$ }
                    \blue{
                        \For{$j=t+T -1$, \ldots, t} 
                        \Comment{This loop is parallelized over the sequences.}
                            \State{$\delta_j = R_{j+1} + \gamma \hat{v}(s_{j+1},\neww) - \hat{v}(s_{j},\neww)$}
                            \State{$\nabla{\delta_j}_{\neww} = R_{j+1} + \gamma \nabla{\hat{v}}(s_{j+1},\neww) - \nabla{\hat{v}}(s_{j},\neww)$}
                            \State{$\delta_j^{\lambda} = \delta_j + \gamma \lambda 
                            \delta_{j+1}^{\lambda}$}
                            \State{$\nabla{\delta}_j^{\lambda} = \nabla{\delta}_j + \gamma \lambda 
                            \nabla{\delta}_{j+1}^{\lambda}$}
                        \EndFor{}
                        }

                    \State{Calculate: $\log \pi_{new}(a|s, \ntheta)$ for samples in the mini-batch.}
                    \State{\textbf{Policy objective:} $L_{\text{p}} = - \frac{1}{n}\sum_{j=1}^{n} \min(r_{j}\delta^{\lambda}_{j} ,${$ \text{clip}_{\epsilon}(r_{j}) \delta^{\lambda}_{j} )$}}
                    \State{where $r_j= \frac{\pi(a_j|s_j,\ntheta)}{\pi(a_j|s_j,\otheta)}$, {and $\text{clip}_{\epsilon}(r_j) = \text{clip}(r_j, 1-\epsilon,1+\epsilon)$}}
                    \State{\textbf{Calculate the entropy of the policy:} $L_S = \frac{1}{n}\sum_{j=1}^{n}S(\pi(s_j,\ntheta))$}
                    \State{\textbf{Calculate the total loss:} $L = L_{\text{p}}  - c_{2} L_S$}
                    \State{\textbf{Calculate the gradient }$\hat{g}$}
                    \If{$\lVert\hat{g}\rVert > c$ }
                    \State{$\hat{g} \gets \frac{c}{\lVert\hat{g}\rVert}\hat{g}$}
                    \EndIf{}
                    \State{Update the policy using the gradient to minimize the loss function.}
                    \blue{
                    \State{Update value parameters using the following update:}
                    \State{$\delta^\lambda_t \grad{\vw} v_t - h_t \grad{\vw} (v_t + \delta^\lambda_t)$}
                    $(\delta^\lambda_t - h_t) \grad{\vtheta} h_t - \beta \vtheta_{h,t}$ \\
                    \State{Update $h$ parameters using the following update:}
                    \State{$(\delta^\lambda_t - h_t) \grad{\vtheta} h_t - \beta \vtheta_{h,t}$ }
                    }
                    
                \EndFor{}    
            \EndFor{}
        \EndFor{}
    \end{algorithmic}
  \end{algorithm}

\subsection{Experimental Details of PPO and Gradient PPO}\label{app:gppo}
For PPO, we used the default hyperparameters widely used for PPO, which reproduce the best-reported performance for PPO on MuJoCo \citep {shengyi2022the37implementation}. We include those hyperparameters in \Cref{tab:ppo_hypers} for completeness.

For Gradient PPO, we first performed a hyperparameter sweep for $\lambda$, actor learning, and critic learning rate. We show the ranges used for the sweep in Table~\ref{tab:gppo_hypers_ranges}. We repeated the experiments for each hyperparameter configuration in that sweep over $5$ seeds. Based on that sweep, we chose the hyperparameters that generally performed well across all environments, Table~\ref{tab:gppo_hypers}. Finally, we fixed those hyperparameters configurations for all environments and ran the algorithm again for $30$ seeds using those hyperparameters. 

\begin{figure}[htb]
    \centering
    \includegraphics[width=\linewidth]{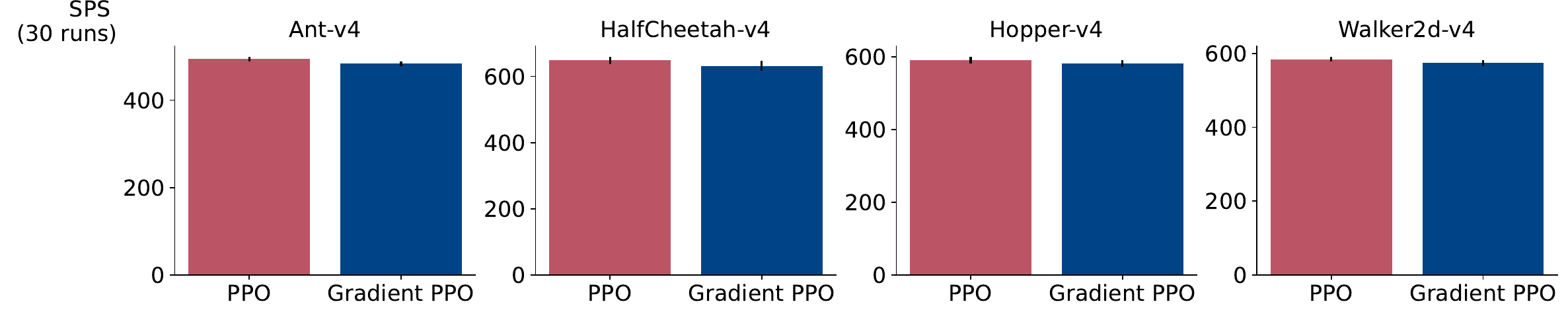}
    \caption{SPS for Gradient PPO and PPO evaluated on four MuJoCo environments. The bars indicate the mean across $30$ runs and the black bars indicate the standard error.}
    \label{fig:ppo_sps}
\end{figure}

Finally, to show that the additional calculations don't affect the run time of Gradient PPO, we plotted the Steps Per Second (SPS) for both PPO and Gradient PPO across all environments. \Cref{fig:ppo_sps} shows that the SPS values are almost the same moving from PPO to Gradient PPO.

\begin{table}[H]
    \centering
    \begin{tabular}{l l}
        \hline
        Name & Default Value \\
        \hline
        Policy Network & (64, tanh, 64, tanh, Linear) + Standard deviation variable \\
        Value Network & (64, tanh, 64, tanh, Linear) \\
        Buffer size & 2048 \\
        Num epochs & 4 \\
        Mini-batch size & 64 \\
        GAE, $\lambda$ & 0.95 \\
        Discount factor, $\gamma$ & 0.99 \\
        Clip parameter & 0.2 \\
        Input Normalization & True \\
        Advantage Normalization & True \\
        Value function loss clipping & True \\
        Max Gradient Norm & 0.5 \\
        Optimizer & Adam \\
        Actor step size & 0.0003 \\
        Critic step size & 0.0003 \\
        Optimizer $\epsilon$ & $1 \times 10^{-5}$ \\
        \hline
    \end{tabular}
    \caption{The default hyperparameters used for PPO. The hyperparameter values are based on the implementation details by \cite{shengyi2022the37implementation}.}
    ~\label{tab:ppo_hypers}
\end{table}

\begin{table}[H]
    \centering
    \begin{tabular}{l l}
        \hline
        Name & Default Value \\
        \hline
        Policy Network & (64, tanh, 64, tanh, Linear) + Standard deviation variable \\
        Value Network & (64, tanh, 64, tanh, Linear) \\
        Buffer size & 2048 \\
        Num epochs & 4 \\
        Mini-batch size & 256 (split into $8$ sequences of length $32$)\\
        $\lambda$ & 0.95 \\
        Discount factor, $\gamma$ & 0.99 \\
        Clip parameter & 0.2 \\
        Input Normalization & True \\
        Advantage Normalization & True \\
        Max Gradient Norm & 0.5 \\
        Optimizer & Adam \\
        Actor step size & 0.0003 \\
        Critic step size & 0.003 \\
        h step size & 0.003 \\
        regularization coef, $\beta$ & 1.0 \\
        Optimizer $\epsilon$ & $1 \times 10^{-5}$ \\
        \hline
    \end{tabular}
    \caption{The hyperparameters used for Gradient PPO in all the MuJoCo experiments.}
    \label{tab:gppo_hypers}
\end{table}

\begin{table}[b]
    \centering
    \begin{tabular}{l l}
        \hline
        Name & Sweep Range \\
        \hline
        $\lambda$ & $[0.7,0.8,0.9,0.95]$ \\
        Actor step size & $[0.001,0.003,0.0001,0.0003,0.00001,0.00003]$ \\
        Critic step size & $[0.001,0.003,0.0001,0.0003,0.00001,0.00003]$ \\
        regularization coef, $\beta$ & $[1.0,0.0]$ \\
        \hline
    \end{tabular}
    \caption{Hyperparameter ranges used for the sweep experiments for Gradient PPO.}
    \label{tab:gppo_hypers_ranges}
\end{table}
\section{Backward-View Algorithms}
In this section, we provide the pseudocode and the hyperparameters details for the backward-view algorithms.
\Cref{alg:qrc_lambda} shows the pseudocode for QRC($\lambda$), where at each timestep, the agent samples an action from an $\epsilon$-greedy policy and takes one step in the environment to observe the subsequent reward and next state. Then, based on that transition, it makes an update to its parameters using the traces it is carrying and also updates those traces.
\begin{algorithm}[!hbt]
    \caption{QRC($\lambda$) Algorithm}~\label{alg:qrc_lambda}
    \begin{algorithmic}
        \State{Input: a differentiable state-value function parametrization $\hat{q}_\mathbf{w}$}
        \State{Input: a differentiable auxiliary function parametrization $\hat{h}_\mathbf{\theta}$}
        \State{Algorithm parameters: learning rate $\alpha_q$, $h$ learning rate $\alpha_h$,  exploration parameter $\epsilon$.}
        \State{Initialize $\vz^\vtheta_t \leftarrow \mathbf{0}$}
        \State{Initialize $\vz_{t}^{\vw} \leftarrow \mathbf{0}$}
        \State{Initialize $z^{h}_t \leftarrow 0$}
        \State{Observe initial state $S_0$}
        \For{iteration $t = 1, 2, \cdots$}
            \State{Sample an action $A_t \sim \pi$ }.\Comment{We use an $\epsilon$-greedy policy.}
            \State{Take action $A_t$, observe $R_{t+1}$ and $S_{t+1}$.} 
            \State{Compute $\delta_t$ and $\nabla_{\vw_t}\delta_t$ according to Eq.~\ref{eq:td_q} and Eq.~\ref{eq:td_q_grad}, respectively.}
            \State{Update the traces $\vz^\vtheta_t$, $\vz_{t}^{\vw}$, and $z^{h}_t$ according to Eq.\ref{eq:traces_q}.}
            \State{Compute $\Delta \vw_t$ and $\Delta \vtheta_t$ according to Eq.\ref{eq:traces_q}.} \Comment{To use the other algorithmic variants, replace those equations with other backward-view algorithms in Table~\ref{tab:prediction_algorithms}.}
            \State{Update the parameters $\vw_{t+1} \leftarrow \vw_t + \alpha_q \Delta \vw_t$}
            \State{Update the parameters $\vtheta_{t+1} \leftarrow \vtheta_t + \alpha_h \Delta \vtheta_t$}
            \If{episode terminated or $A_t$ is non-greedy}
            \State{reset the traces $\vz^\vtheta_t$, $\vz_{t}^{\vw}$, and $z^{h}_t$ to zeros.}
            \EndIf
        \EndFor{}
    \end{algorithmic}
  \end{algorithm}

\subsection{Experimental Details of MinAtar}\label{app:minatar}
In our experiments with QRC($\lambda$), we used the same normalization wrappers and the sparse initialization proposed by~\cite{elsayed2024streaming}. However, we used SGD as the optimizer, as the optimizer presented~\cite{elsayed2024streaming} can't be easily mapped to our updates. These choices were made so that the difference in performance between those algorithms can be associated with the Gradient TD updates.

For QRC($\lambda$) and Q($\lambda$), we first performed a hyperparameter sweep over $\lambda$, and the learning rate for both the value network and the $h$ network, for QRC($\lambda$). We include the values used for the sweep in ~\ref{tab:minatr_q_sweep} and the final hyperparameters used in \Cref{tab:minatr_qrc_hypers}.

Finally, we estimated the SPS for both QRC($\lambda$) and Q($\lambda$), and the estimated values are shown in Figure~\ref{fig:qs_sps}. Since the backward-view algorithms do not have batch updates, we were not able to parallelize the additional computations as we did in the forward-view (i.e, Gradient PPO), and we notice a marginal runtime increase in run time for QRC($\lambda$) compared to Q($\lambda$) due to the additional computation required for the auxiliary variable $h$. 

\begin{figure}[H]
    \centering
    \includegraphics[width=\linewidth]{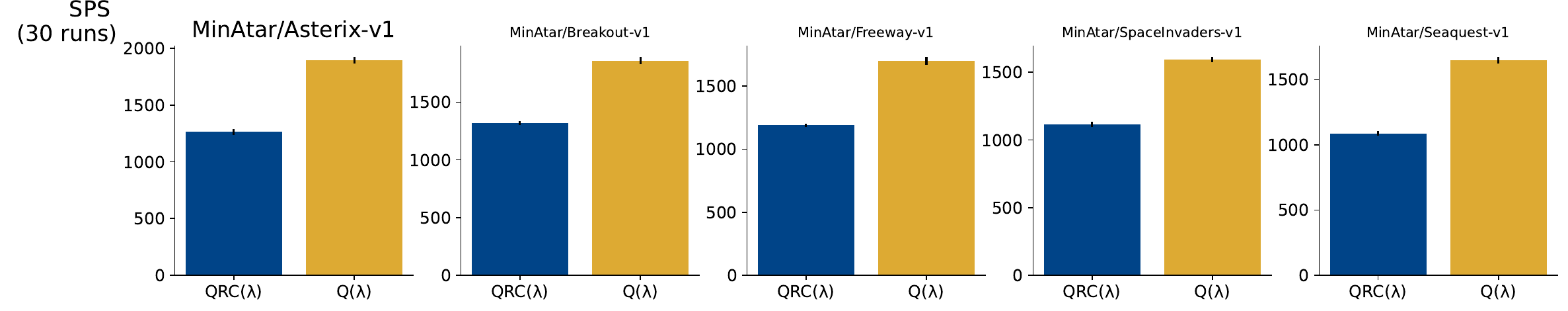}
    \caption{SPS for QRC($\lambda$) and Q($\lambda$) algorithms on MinAtar environments.}
    \label{fig:qs_sps}
\end{figure}

\begin{table}[H]
    \centering
    \begin{tabular}{l l}
        \hline
        Name & Default Value \\
        \hline
        $\lambda$ & [0.7,0.8,0.9,0.95] \\
        Optimizer & SGD \\
        step size & [0.001,0.0001,0.00001,0.000001] \\
        h step scale & [1.0,0.1] \\
        regularization coef, $\beta$ & [1.0,0.0] \\
        \hline
    \end{tabular}
    \caption{Hyperparameters ranges used for the sweep experiments in MinAtar.}
    ~\label{tab:minatr_q_sweep}
\end{table}

\begin{table}[H]
    \centering
    \begin{tabular}{l l}
        \hline
        Name & Default Value \\
        \hline
        $\lambda$ & 0.8 \\
        Input Normalization & True \\
        Optimizer & SGD \\
        step size & 0.0001 \\
        h step size & 1.0 \\
        regularization coef, $\beta$ & 1.0, for QRC($\lambda$), and 0.0, for QC($\lambda$) and GQ2($\lambda$). \\
        start exploration $\epsilon$, & 1.0 \\
        end exploration $\epsilon$, & 0.01 \\
        exploration fraction & 0.2 \\
        \hline
    \end{tabular}
    \caption{Final hyperparameters used for QRC($\lambda$) experiments with MinAtar}
    ~\label{tab:minatr_qrc_hypers}
\end{table}

\end{document}